\documentclass[runningheads]{llncs}

\usepackage[T1]{fontenc}
\usepackage{graphicx}
\usepackage{amsmath,amssymb}
\usepackage{booktabs}
\usepackage{url}
\usepackage{tikz}
\usetikzlibrary{arrows.meta,positioning,fit,calc}

\begin{document}

\title{CADRE: Stable, Parameter-Efficient Adaptation of Medical Vision-Language
Models with Bounded Forgetting and Prior Drift}
\titlerunning{CADRE: Stable, Parameter-Efficient Adaptation of Medical VLMs}

\author{Rishabh Jha\inst{1} \and Amrita Singh\inst{2} \and Prashanna Chudal\inst{2}}
%index{Jha, Rishabh}
%index{Singh, Amrita}
%index{Chudal, Prashanna}
\authorrunning{R. Jha et al.}
\institute{Department of Computer Science, University of Victoria,
British Columbia, Canada\\ \email{rishabhjha12@uvic.ca}
\and Mindrisers Consortium, Kathmandu, Nepal}

\maketitle

\begin{abstract}
Adapting a medical vision-language model (VLM) such as BiomedCLIP to a clinical service
is as much a stability problem as an accuracy problem: an update for a new imaging
modality can silently lose competence on modalities already handled and drift from the
trustworthy pretrained prior toward modality-specific shortcuts. Elastic weight
consolidation (EWC), the standard remedy, applies one fixed multiplier to an
unnormalized, accumulating Fisher, so the effective constraint depends on task loss
scale and on how many modalities have been consolidated, and the same hyper-parameter is
too weak under one arrival order and too strong under another. We show this is a
\emph{scale} effect and remove it by construction. CADRE keeps the backbone frozen and
adapts through low-rank adaptation (LoRA), regularized by an online, self-scaling
consolidation term over a sum-normalised Fisher plus an anchor-to-prior penalty on
embedding drift; consolidation mass is then bounded independently of the number of
modalities and the self-scaled penalty is invariant to rescaling. On a deliberately
dissimilar three-modality stress test (breast histopathology, ultrasound, chest
radiography) at $\approx$0.23\% of parameters, CADRE reduces forgetting roughly
sevenfold against the strongest LoRA-family baseline ($0.075\!\rightarrow\!0.011$;
paired $p{=}0.023$, $n{=}6$) and is the only method compared with positive backward
transfer.

\keywords{Continual learning \and Vision-language models \and Parameter-efficient
fine-tuning \and Elastic weight consolidation \and Calibration.}
\end{abstract}

\section{Introduction}
Foundation vision-language models (VLMs) pretrained on large biomedical image-text
corpora generalize across many tasks, yet a clinical deployment rarely matches the
pretraining distribution: a breast-imaging service may want one model to read
histopathology, ultrasound and radiographs, and to keep improving as new modalities
arrive. Adapting naively to each new modality in turn is unsafe in two specific senses.
\emph{Silent competence loss}: sequential adaptation degrades performance on earlier
modalities~\cite{mccloskey1989catastrophic}, which in a clinic is invisible, as an
ultrasound update can quietly weaken histopathology reading and surface only as missed
diagnoses. \emph{Silent prior drift and order fragility}: adapters can move a model
toward dataset-specific shortcuts, and elastic weight consolidation (EWC), the standard
regularization remedy, is itself fragile: a fixed global penalty on an unnormalized,
accumulating importance estimate over-constrains whichever modality arrives last, so the
same method can be well behaved under one order and not under another.

That second point is the technical core of this paper. Rather than asking how high
accuracy is after adaptation, we ask \emph{what adaptation preserves about the model we
already trusted}, requiring three monitorable properties: retained competence, bounded
drift from the pretrained prior, and usable calibration (S1 to S3,
Sec.~\ref{sec:method}). These align with clinical-safety desiderata but do not guarantee
safe deployment. \textbf{CADRE} (Clinician-anchored, Domain-Robust, Efficient
adaptation) freezes the BiomedCLIP backbone and adapts through low-rank adaptation
(LoRA)~\cite{hu2021lora}, regularized by an online self-scaling EWC term and an
anchor-to-prior penalty, with light label smoothing and evaluation-time weight averaging
(Fig.~\ref{fig:method}).

\smallskip\noindent\textbf{Contributions.} (1) A \emph{diagnosis}: two coupled
\emph{scale} problems in online EWC (loss-scale dependence of the raw Fisher and
unbounded accumulation of the running estimate) account for much of its sensitivity to
arrival order. (2) A \emph{repair} with two guarantees: sum-normalising each
consolidated Fisher bounds total consolidation mass independently of the number of
modalities (Proposition~\ref{prop:mass}), and setting the multiplier as a fixed fraction
of the task loss makes the trade-off invariant to penalty rescaling
(Proposition~\ref{prop:scale}); together these remove a hand-tuned hyper-parameter
rather than replacing it. (3) A stability-oriented protocol with leakage controls,
multi-seed and multi-order runs, paired testing and component attribution; against the
LoRA-family baselines evaluated, CADRE attains the highest accuracy,
Stability-Plasticity Quotient and backward transfer and the lowest forgetting,
significantly so against plain LoRA and LoRA$+$Anchor, with the LoRA$+$EWC comparison
unresolved.

\section{Related Work}
\textbf{Parameter-efficient fine-tuning (PEFT) and continual learning.}
Adapter~\cite{houlsby2019parameter} and low-rank~\cite{hu2021lora} tuning freeze a
backbone and train a small injected parameter set; LoRA is now a default for medical
vision-language adaptation, but such methods control \emph{where} a model changes
without bounding earlier-competence loss or representation drift. Sequential training
induces catastrophic forgetting~\cite{mccloskey1989catastrophic}, addressed by
regularization, replay and architectural
families~\cite{parisi2019continual,delange2022continual}; replay and exemplar
methods~\cite{lopez2017gradient,rebuffi2017icarl} retain past data, often clinically
unacceptable for privacy, while regularization methods weight movement by an importance
estimate~\cite{kirkpatrick2017overcoming,zenke2017synaptic,aljundi2018memory}, online
EWC~\cite{schwarz2018progress} keeping a single running term and functional methods
regularizing in output space~\cite{li2017learning,titsias2020functional}. We build on
online EWC, remove its scale-and-order fragility (Sec.~\ref{sec:method}), and anchor in
\emph{embedding} space to the \emph{frozen prior} rather than distilling from the
previous model. Composition methods allocate capacity per task via prompt
pools~\cite{wang2022dualprompt,jia2022vpt} or orthogonalized
adapters~\cite{wang2023olora,liang2024inflora}, growing parameters with the task count
or adding inference-time selection; CADRE keeps a \emph{single} shared adapter, though
we do not compare against that family empirically (Sec.~\ref{sec:discussion}).

\textbf{Medical continual learning, calibration and weight averaging.}
EWC and learning without forgetting (LwF) have been studied for chest-radiograph
classification~\cite{lenga2020continual}, and dynamic memory for acquisition
shift~\cite{perkonigg2021dynamic,medclsurvey2024}. Closest to us, recent work adapts the
EWC penalty for medical VLMs via prompt-guided grouping and similarity-weighted
Fisher~\cite{paewc2025}; CADRE differs in requiring \emph{no} hand-tuned strength, in
making the penalty scale-free via a sum-normalised Fisher with a proven mass bound, and
in adding an explicit anchor-to-prior term. We adapt
BiomedCLIP~\cite{zhang2023biomedclip} rather than train a VLM~\cite{wang2022medclip},
and use Expected Calibration Error (ECE)~\cite{guo2017calibration}, label
smoothing~\cite{szegedy2016rethinking} and weight
averaging~\cite{izmailov2018averaging,tarvainen2017mean}.

\section{Method: CADRE}
\label{sec:method}
\textbf{Adaptation model.} The BiomedCLIP backbone (ViT-B/16 vision encoder, PubMedBERT
text encoder) stays frozen; we train only a parameter vector $\theta$ ($\approx$0.23\%
of the backbone) of LoRA factors injected into the attention projections of the 25
visual-encoder layers, plus the linear head. We write $g_{\text{LoRA}}$ and
$g_{\text{frozen}}$ for the adapted and frozen-prior image embeddings. Modalities arrive
as a stream $\mathcal{D}_1,\dots,\mathcal{D}_T$, each a balanced binary task.

\textbf{Metrics.} Let $A_{t,j}$ be accuracy on modality $j$ after training through
modality $t$. We report average final accuracy and the area under the
receiver-operating-characteristic curve (AUROC); \emph{forgetting}, the mean drop from
each earlier modality's \emph{best} accuracy to its \emph{final} accuracy;
\emph{backward transfer} (BWT), the mean change in an earlier modality's accuracy
between the point at which it was \emph{learned} and the final state; and the
\emph{Stability-Plasticity Quotient} (SPQ), the harmonic mean of plasticity (mean
accuracy on each modality when first learned) and stability (one minus forgetting,
floored at zero). Lower forgetting and higher BWT and SPQ are better; the first two use
different reference points, best-to-final and learned-to-final.

\textbf{Stability objectives.} An update should satisfy three monitorable properties:
\textbf{(S1)} bounded competence loss, \textbf{(S2)} bounded prior drift and
\textbf{(S3)} usable calibration, targeted respectively by the consolidation term, the
anchor and the calibration components. (S1) and (S3) are measured below; (S2) is imposed
by construction and not measured (Sec.~\ref{sec:discussion}).

% ---------------------------------------------------------------------------
% Fig. 1 -- CADRE architecture. Pure TikZ, greyscale only, no external image.
% Explicit millimetre grid (x=1mm, y=1mm) spanning 0--122mm, the LNCS text width.
%   * every node has a fixed text width, so no box can grow into its neighbour
%   * horizontal gaps are a uniform 5mm; band 1 boxes share one 12mm height
%   * bands are placed by their north anchor, so box tops are flush per band
%   * connectors use node anchors (-|, |-), never hand-typed endpoints
% Vertical grid: band1 y=0 | adapter/prior -14 | objective -33 | mechanisms -49
% ---------------------------------------------------------------------------
\begin{figure}[t]
\centering
\resizebox{\textwidth}{!}{%
\begin{tikzpicture}[x=1mm, y=1mm, font=\scriptsize,
  >={Stealth[length=1.6mm]},
  every node/.style = {inner xsep=1.5mm, inner ysep=1mm},
  box/.style    = {draw=black!65, rounded corners=0.6mm, align=center,
                   fill=white, minimum height=12mm},
  frozen/.style = {box, draw=black, fill=black!10},
  train/.style  = {box, draw=black, line width=0.9pt},
  item/.style   = {draw=black!65, rounded corners=0.6mm, align=center,
                   fill=white, minimum height=6.5mm, text width=21mm},
  mech/.style   = {draw=black!65, rounded corners=0.6mm, align=left, anchor=north,
                   fill=black!4, text width=34.5mm, minimum height=16mm},
  fwd/.style    = {->, draw=black, line width=0.45pt},
  pri/.style    = {->, draw=black!75, line width=0.45pt, densely dotted},
  grad/.style   = {->, draw=black, line width=0.7pt, dashed},
  lb/.style     = {inner xsep=0.6mm, inner ysep=0.4mm}]

% ===== band 1 : forward path, all boxes centred on y = 0 ===================
\node[item] (d1) at (13.5,  9) {$\mathcal{D}_1$ histopathology};
\node[item] (d2) at (13.5,  0) {$\mathcal{D}_2$ ultrasound};
\node[item] (d3) at (13.5, -9) {$\mathcal{D}_3$ radiography};
\draw[fwd] (d1) -- (d2);
\draw[fwd] (d2) -- (d3);
\node[draw=black!45, densely dashed, rounded corners=1mm, inner sep=1.5mm,
      fit=(d1)(d2)(d3)] (stream) {};
\node[lb, anchor=south west] at (stream.north west) {sequential modality stream};

\node[frozen, text width=26mm] (enc)  at ( 46.5, 0)
      {BiomedCLIP\\ViT-B/16 encoder\\\emph{frozen}};
\node[train,  text width=15mm] (head) at ( 75.0, 0) {linear head\\\emph{trainable}};
\node[box,    text width=13mm] (pred) at ( 97.0, 0) {softmax $\hat{y}$};
\node[box,    text width= 9mm] (lab)  at (116.0, 0) {label $y$};

\draw[fwd] (stream.east) -- (enc.west);
\draw[fwd] (enc.east)    -- (head.west);
\draw[fwd] (head.east)   -- (pred.west);
\draw[fwd] (pred.east)   -- (lab.west);
\node[lb] at (63.5, 8.6) {$g_{\text{LoRA}}$};

% ===== band 2 : adapter and frozen prior, tops flush at y = -14 ============
\node[train, anchor=north, text width=41mm, minimum height=14mm] (lora)
      at ( 54.0,-14)
      {visual LoRA \emph{trainable} --- rank~8, $\alpha{=}16$,
       $\approx$0.23\% of parameters,\\injected into the attention projections};
\node[frozen, anchor=north, text width=38mm, minimum height=14mm, densely dashed]
      (prior) at (101.5,-14)
      {$g_{\text{frozen}}$: the same encoder\\with its LoRA paths off\\(cached prior)};

\coordinate (tap) at (58,0);
\draw[<->, draw=black, line width=0.45pt] (enc.south) -- (enc.south |- lora.north);
\draw[pri] (enc.south -| tap) -- ++(0,-4) -| (prior.north);

% ===== legend ==============================================================
\draw[fwd]  ( 2,-33) -- ( 8,-33);  \node[lb, anchor=west] at ( 9,-33) {forward path};
\draw[pri]  (28,-33) -- (34,-33);  \node[lb, anchor=west] at (35,-33) {prior path};
% \draw[grad] 
\node[draw=black, fill=black!10, rounded corners=0.4mm, inner sep=0pt,
      minimum width=6mm, minimum height=3mm] at (60,-33) {};
\node[lb, anchor=west] at (64,-33) {frozen};
\node[draw=black, fill=white, line width=0.9pt, rounded corners=0.4mm, inner sep=0pt,
      minimum width=6mm, minimum height=3mm] at (80,-33) {};
\node[lb, anchor=west] at (84,-33) {trainable};

\end{tikzpicture}}
\caption{Overview of CADRE. A frozen BiomedCLIP encoder is adapted sequentially over
dissimilar modalities through one shared LoRA adapter and a linear head
($\approx$0.23\% of parameters). The prior $g_{\text{frozen}}$ is the same encoder with
its LoRA paths off, so the anchor compares two states of one model, not two models.}
\label{fig:method}
\end{figure}

\textbf{Why vanilla EWC is order-fragile.} EWC weights each parameter by its Fisher
importance and pulls it back toward a stored reference $\theta^{*}$; online
EWC~\cite{schwarz2018progress} keeps a single running estimate $\mathcal{F}$ and one
reference rather than one per task:
\begin{equation}
\mathcal{L}_{\text{EWC}}(\theta)=\textstyle\sum_i \mathcal{F}_i\,(\theta_i-\theta^{*}_i)^2,
\quad
\mathcal{F}\leftarrow \gamma_{\text{sim}}\,\mathcal{F}+\hat{\mathcal{F}}^{(t)},
\quad \theta^{*}\leftarrow\theta_t .
\label{eq:ewc}
\end{equation}
A single fixed multiplier $\lambda$ on the raw Fisher creates two coupled scale
problems. The raw Fisher tracks gradient magnitude, so it varies with task difficulty
and loss scale, and the running sum in \eqref{eq:ewc} grows as tasks are consolidated;
one $\lambda$ cannot stay comparable across steps. That sum is also order-dependent, so
a $\lambda$ tuned for one order over- or under-constrains another and the last-arriving
modality faces the largest accumulated penalty.

\textbf{Scale-decoupling.} \textbf{(M1) Sum-normalised Fisher.} Before consolidating
each task's Fisher we rescale it to unit total mass, so every modality contributes the
same total importance regardless of its loss scale and the running estimate in
\eqref{eq:ewc} can never accumulate unbounded mass. \textbf{(M2) Self-scaling
strength.} Rather than tune a fixed $\lambda$, we recompute it at each step from
detached (stop-gradient) loss values,
\begin{equation}
\lambda_t=\min\!\Big(\rho\,\tfrac{\mathcal{L}_{\text{CE}}}{\mathcal{L}_{\text{EWC}}+\varepsilon},\ \lambda_{\max}\Big),\qquad \rho=0.3 ,
\label{eq:selfscale}
\end{equation}
so that, when unclipped, the penalty always contributes a fixed fraction $\rho$ of the
current task loss; a floor $\varepsilon$, a cap $\lambda_{\max}$ and a 50-step warm-up
(with $\lambda_t{=}0$ while the first Fisher is estimated) stop it spiking before the
prior is meaningfully perturbed. \textbf{(M3) Similarity-aware retention.} Keeping old
constraints in full over-regularises an incompatible target, so we scale retention of
the running Fisher by the cosine between successive modality prototypes $p_t$ (means of
normalised embeddings over the modality's training split):
\begin{equation}
\gamma_{\text{sim}}=\gamma\cdot\tfrac{1}{2}\big(1+\cos(p_t,p_{t-1})\big),
\qquad \gamma=0.9 ,
\label{eq:simretain}
\end{equation}
so dissimilar modalities relax retention and similar ones preserve it, and since
$\cos(\cdot,\cdot)\in[-1,1]$ this keeps $\gamma_{\text{sim}}\in[0,\gamma]$, the
precondition of Proposition~\ref{prop:mass}. At this scale M3 has no measurable effect
on forgetting, so we present it as an optional refinement rather than a supported
contribution.

\textbf{Drift bound: anchor-to-prior.} To control (S2) we add an embedding-space
penalty~\cite{li2017learning,titsias2020functional} keeping the adapted encoder close to
the frozen prior on a fixed probe set $\mathcal{A}$ ($|\mathcal{A}|{=}64$):
\begin{equation}
\mathcal{L}_{\text{anchor}}=\frac{1}{|\mathcal{A}|}\sum_{x\in\mathcal{A}}
\big\|g_{\text{LoRA}}(x)-g_{\text{frozen}}(x)\big\|_2^2 .
\label{eq:anchor}
\end{equation}
$\mathcal{A}$ is fixed at the start of the first modality, drawn from the \emph{training}
split only with its prior embeddings cached, so it cannot leak into evaluation, and the
two embeddings come from \emph{different} adapter states on the \emph{same} input.
Adding $\beta\,\mathcal{L}_{\text{anchor}}$ is the Lagrangian of minimising the task loss
subject to a cap on \emph{mean} drift over $\mathcal{A}$: a soft bound on
\emph{expected} drift, not a per-input Lipschitz guarantee.

\textbf{Calibration and the full objective.} For (S3), CADRE adds light label smoothing
($\varepsilon_{\text{ls}}{=}0.05$) and an exponential moving average (EMA, decay $0.99$)
of the trainable parameters applied at \emph{evaluation} only, in the manner of
stochastic weight averaging~\cite{izmailov2018averaging}; ECE is the mean gap between
confidence and accuracy across equal-width bins. The per-step loss is the label-smoothed
cross-entropy plus the self-scaled consolidation penalty (\eqref{eq:selfscale},
sum-normalised and similarity-decayed) plus the anchor penalty. During training the
Fisher and reference $\theta^{*}$ come from the live weights, so the penalty and the
optimised parameters are always the same set; only evaluation uses EMA weights, and
because EMA can itself reduce measured forgetting we \emph{attribute} its contribution
rather than credit it to consolidation.

\textbf{Theoretical analysis.} Two short results explain why the scale fixes work.
\begin{proposition}[Bounded consolidation mass]\label{prop:mass}
If each consolidated Fisher has unit mass and the retention factor satisfies
$\gamma_{\text{sim}}\in[0,\gamma]$ with $0\le\gamma<1$, then the total mass
$m_t=\sum_i\mathcal{F}_i$ stays bounded by $1/(1-\gamma)$ for all $t$, independent of
the number of modalities $T$.
\end{proposition}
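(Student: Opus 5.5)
The plan is an induction on $t$ using the online update \eqref{eq:ewc}. First, I would check that the Fisher entries are nonnegative: every empirical Fisher $\hat{\mathcal{F}}^{(t)}$ is an average of squared gradients, and sum-normalisation (M1) divides it by a positive scalar. As a result, the total mass $m_t=\sum_i\mathcal{F}_i$ is a plain sum of nonnegative numbers and is linear under the update. I would take $\mathcal{F}$ to be zero before the first consolidation, so $m_0=0$, and write $\gamma_t\in[0,\gamma]$ for the similarity-aware factor \eqref{eq:simretain} applied at step $t$. Summing the update coordinatewise and using unit mass of each new term gives the scalar recursion
\begin{equation*}
m_t=\gamma_t\,m_{t-1}+1 .
\end{equation*}

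Next, I would bound this recursion. Since $m_{t-1}\ge 0$ and $\gamma_t\le\gamma$, we get $m_t\le\gamma\,m_{t-1}+1$. Unrolling from $m_0=0$ gives
\begin{equation*}
m_t\le\sum_{k=0}^{t-1}\gamma^k=\frac{1-\gamma^t}{1-\gamma}<\frac{1}{1-\gamma}.
\end{equation*}
Equivalently, one can run a direct induction with hypothesis $m_{t-1}\le 1/(1-\gamma)$, which yields $m_t\le \gamma/(1-\gamma)+1=1/(1-\gamma)$. The bound depends only on $\gamma$, so it holds uniformly in $t$ and hence independently of $T$.

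The argument is nearly mechanical, so the main obstacle is a matter of bookkeeping: the hypotheses must be stated so that the recursion is valid. Two points need care.

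\begin{itemize}
\item \emph{Nonnegativity.} It is needed both to pass from $\gamma_t$ to $\gamma$ and to identify mass with the sum of entries. If a normaliser of zero could occur, for example from a vanishing gradient, I would note that the floor $\varepsilon$ or a convention of skipping the consolidation keeps the term of mass at most one. In that case the recursion holds with inequality, and the bound is unchanged.
\item \emph{The retention factor.} The precondition $\gamma_{\text{sim}}\in[0,\gamma]$ is exactly what \eqref{eq:simretain} guarantees, since $\tfrac12(1+\cos)\in[0,1]$.
\end{itemize}

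Finally, I would remark that the bound is tight: it is approached when $\gamma_t\equiv\gamma$.
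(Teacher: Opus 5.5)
Your proof is correct and follows the paper's argument: the scalar recursion $m_t=\gamma_{\text{sim}}m_{t-1}+1\le\gamma m_{t-1}+1$ from $m_0=0$, unrolled to $(1-\gamma^t)/(1-\gamma)$. Your nonnegativity check makes explicit a step the paper uses silently. One small slip: your strict inequality $<1/(1-\gamma)$ fails when $\gamma=0$, since then $m_t=1=1/(1-\gamma)$ for $t\ge1$, so write $\le$ as the proposition does.
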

\noindent\emph{Proof.} The update in \eqref{eq:ewc} gives
$m_t=\gamma_{\text{sim}}m_{t-1}+1\le\gamma m_{t-1}+1$; with $m_0=0$, induction yields
$m_t\le\frac{1-\gamma^t}{1-\gamma}\le\frac{1}{1-\gamma}$.\hfill$\square$

\begin{proposition}[Scale-invariant trade-off]\label{prop:scale}
Let $\varepsilon\to0$ with $\lambda_t$ unclipped. Under any rescaling of the penalty
$\mathcal{L}_{\text{EWC}}\mapsto c\,\mathcal{L}_{\text{EWC}}$ ($c>0$), both the
self-scaled penalty value and its gradient are unchanged.
\end{proposition}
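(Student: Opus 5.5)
The plan is a direct computation with \eqref{eq:selfscale}, taking care over what is held fixed under the rescaling. Set $\varepsilon=0$ and assume $\lambda_t$ is unclipped. The self-scaled penalty is $P(\theta)=\lambda_t\,\mathcal{L}_{\text{EWC}}(\theta)$, where $\lambda_t=\rho\,\mathrm{sg}[\mathcal{L}_{\text{CE}}]/\mathrm{sg}[\mathcal{L}_{\text{EWC}}]$ and $\mathrm{sg}$ denotes the stop-gradient. I will assume $\mathcal{L}_{\text{EWC}}>0$ at the current iterate, since otherwise the ratio is undefined and the clip or floor is active. This is outside the hypotheses.

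\textbf{Step 1: the value.} After the rescaling, the multiplier becomes
\[
\lambda_t'=\rho\,\mathcal{L}_{\text{CE}}/(c\,\mathcal{L}_{\text{EWC}})=\lambda_t/c .
\]
Hence $P'=\lambda_t'\,c\,\mathcal{L}_{\text{EWC}}=\lambda_t\mathcal{L}_{\text{EWC}}=P$. Evaluated at the current iterate, this common value is $\rho\,\mathcal{L}_{\text{CE}}$, so the fixed-fraction interpretation is immediate.

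\textbf{Step 2: the gradient.} Because $\lambda_t$ is computed from detached losses, it is a constant with respect to $\theta$. Therefore
\[
\nabla_\theta P'=\lambda_t'\,c\,\nabla_\theta\mathcal{L}_{\text{EWC}}=\lambda_t\nabla_\theta\mathcal{L}_{\text{EWC}}=\nabla_\theta P .
\]
It follows that the full per-step gradient, which is the CE term plus this penalty term plus the anchor term, is unchanged. Any optimizer trajectory is then identical, because the anchor and CE terms do not depend on $c$.

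\textbf{Main obstacle.} The delicate point is the role of the stop-gradient. Without detaching, $P=\rho\,\mathcal{L}_{\text{CE}}$ identically and its gradient would just be $\rho\nabla\mathcal{L}_{\text{CE}}$. The penalty would then exert no consolidation force at all. I will therefore state explicitly that invariance concerns the detached-multiplier gradient $\lambda_t\nabla\mathcal{L}_{\text{EWC}}$, and note that the statement fails once clipping at $\lambda_{\max}$ is active or $\varepsilon>0$. In those regimes the factor $c$ no longer cancels exactly. For $\varepsilon>0$ the discrepancy is $O(\varepsilon/\mathcal{L}_{\text{EWC}})$, which vanishes in the stated limit.
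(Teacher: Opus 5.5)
Your proof is correct and is the paper's argument: the multiplier rescales as $\lambda_t\mapsto\lambda_t/c$, which cancels the factor $c$ both in the penalty value $\rho\,\mathcal{L}_{\text{CE}}$ and, because $\lambda_t$ is detached, in its gradient $\lambda_t\nabla_\theta\mathcal{L}_{\text{EWC}}$. The paper's one-line proof leaves two points implicit that you make explicit: the stop-gradient, without which the penalty's gradient would collapse to $\rho\,\nabla_\theta\mathcal{L}_{\text{CE}}$ and exert no consolidation, and the requirement $\mathcal{L}_{\text{EWC}}>0$.
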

\noindent\emph{Proof.} By \eqref{eq:selfscale} the multiplier rescales as
$\lambda_t\mapsto\lambda_t/c$, which exactly cancels the factor $c$ in both the
penalty value ($\rho\,\mathcal{L}_{\text{CE}}$) and its gradient.\hfill$\square$

M1 and M2 remove the two \emph{scale-related} sources of order dependence but not
interference between modality representations, which is not a scale effect, so we claim
order-robustness only \emph{empirically}. After each modality we update the prototype,
consolidate the Fisher and reference, and re-evaluate all modalities seen so far under
the EMA weights.

\section{Experimental Setup}
\label{sec:setup}
\textbf{Data and task framing.} We use breast histopathology, breast ultrasound and
chest radiography as a deliberately \emph{dissimilar} cross-modality stress test. Each
modality is balanced ($N{=}600$, 50\% positive; $N{=}1800$ total), read as RGB at
$224\times224$; the histopathology source is BreaKHis~\cite{spanhol2016breakhis}, whose
multispectral folder labels are a packaging artefact rather than extra spectral bands.
Chest radiography is an \emph{adversarial negative-transfer probe}, not a clinical task:
it is most distant from breast tissue, its abnormal vs.\ normal labels are not
biopsy-correlated, and no breast-malignancy claim attaches to it. The three tasks use
related but non-identical predicates, so average accuracy is a retention diagnostic for
a shared adapter rather than a clinical performance figure; the claims we defend
(forgetting, BWT and their variance across orders) are computed \emph{within} a modality
and are unaffected by that mismatch. Histopathology is grouped at slide level (43
groups) via BreaKHis SOB filename parsing; ultrasound and radiography group at image
level, since no case identifiers are recoverable from the distributed files, so
same-case leakage cannot be excluded and those two absolute accuracies are likely
optimistic. Splits are stratified by group into train/validation/test ($70/15/15$).

\textbf{Protocol.} All methods adapt a frozen BiomedCLIP backbone under an identical
loop, differing only in the regularization applied: visual LoRA of rank $8$ with
$\alpha{=}16$ on the attention projections, AdamW at learning rate $5\times10^{-4}$ and
weight decay $10^{-4}$, ten epochs per modality, mixed precision, one T4 GPU; CADRE adds
$\rho{=}0.3$, $\gamma{=}0.9$, $\beta{=}0.3$, $|\mathcal{A}|{=}64$, label smoothing
$0.05$, EMA decay $0.99$, and a Fisher over 50 batches. Each run adapts over the three
modalities in turn, re-evaluating all modalities seen so far after each, giving the
matrix $A_{t,j}$. We ran 3 seeds $\times$ 2 arrival orders, chosen so histopathology,
whose retention is most fragile under the naive baseline, appears both first and last,
for $n{=}6$ paired observations per method; three modalities admit $3!{=}6$ orders, so
this samples the order space rather than covering it. Comparisons are \emph{paired}
across the six (seed, order) pairs, two-sided at $5$ degrees of freedom, as mean $\pm$
standard error of the mean (SEM); we apply no multiple-comparison correction and treat
$p$-values as descriptive. The methods compared are \emph{Linear Probe} (frozen
features, linear head), \emph{LoRA}, \emph{LoRA$+$EWC} (vanilla EWC, fixed $\lambda$,
unnormalized Fisher), \emph{LoRA$+$Anchor} and \emph{CADRE}; the LoRA$+$EWC contrast
reflects both the redesign and the added components, separated in
Table~\ref{tab:ablation}. Code, configurations and seed and order lists are released at
\url{https://github.com/rishabhjha1/CADRE}; the retention mapping is
\eqref{eq:simretain}.

\section{Results}
\label{sec:results}
All values are mean$\pm$SEM over 3 seeds $\times$ 2 orders from one unified run, so
comparison, attribution and accuracy matrices are mutually consistent. Among the
adapting methods evaluated, CADRE attains the highest accuracy ($0.775$), SPQ ($0.867$)
and backward transfer ($+0.004$) and the lowest forgetting ($0.011$;
Table~\ref{tab:main}); AUROC is flat across the LoRA family, from $0.813$ to $0.821$,
within roughly one SEM.

\textbf{Forgetting is the headline result.} CADRE reduces forgetting roughly sevenfold
against the strongest regularized baseline, LoRA$+$Anchor ($0.075\!\rightarrow\!0.011$;
paired $p{=}0.023$), and against plain LoRA ($0.084\!\rightarrow\!0.011$; $p{=}0.002$),
and is the only method in Table~\ref{tab:main} with \emph{positive} backward transfer
($+0.004$) where every baseline is clearly negative ($-0.07$ to $-0.08$). The comparison
with LoRA$+$EWC does not reach significance ($p{=}0.072$); that baseline is
high-variance (forgetting SEM $\pm0.026$ against CADRE's $\pm0.006$), so at $n{=}6$ we
can neither claim nor dismiss the difference and report it as unresolved. On accuracy,
paired $t$-tests place CADRE above plain LoRA ($+0.030$, $p{=}0.014$), LoRA$+$Anchor
($+0.021$, $p{=}0.008$) and the Linear Probe ($+0.173$, $p{=}0.002$), the LoRA$+$EWC gap
again not significant ($+0.020$, $p{=}0.372$).

\begin{table}[t]
\centering
\caption{Main comparison (mean$\pm$SEM, 3 seeds $\times$ 2 orders, $n{=}6$). Lower
forgetting is better; higher Acc, AUROC, BWT, SPQ are better. Best per column in
\textbf{bold}; AUROC is not bolded, as its spread across LoRA variants is within noise.
All LoRA variants train $0.231\%$ of parameters (linear probe $0.001\%$) with one shared
adapter, no exemplars and no inference-time routing.}
\label{tab:main}
\small
\setlength{\tabcolsep}{4pt}
\begin{tabular}{lccccc}
\toprule
Method & Acc & AUROC & BWT & Forget. & SPQ \\
\midrule
Linear Probe & $0.603${\scriptsize$\pm.029$} & $0.720${\scriptsize$\pm.008$} & $-0.009${\scriptsize$\pm.046$} & $0.061${\scriptsize$\pm.022$} & $0.738${\scriptsize$\pm.006$} \\
LoRA & $0.746${\scriptsize$\pm.012$} & $0.813${\scriptsize$\pm.011$} & $-0.081${\scriptsize$\pm.017$} & $0.084${\scriptsize$\pm.015$} & $0.854${\scriptsize$\pm.008$} \\
LoRA $+$ EWC & $0.756${\scriptsize$\pm.021$} & $0.820${\scriptsize$\pm.009$} & $-0.070${\scriptsize$\pm.028$} & $0.081${\scriptsize$\pm.026$} & $0.856${\scriptsize$\pm.013$} \\
LoRA $+$ Anchor & $0.754${\scriptsize$\pm.011$} & $0.813${\scriptsize$\pm.009$} & $-0.075${\scriptsize$\pm.016$} & $0.075${\scriptsize$\pm.016$} & $0.860${\scriptsize$\pm.007$} \\
CADRE (full) & $\mathbf{0.775}${\scriptsize$\pm.009$} & $0.821${\scriptsize$\pm.006$} & $\mathbf{0.004}${\scriptsize$\pm.008$} & $\mathbf{0.011}${\scriptsize$\pm.006$} & $\mathbf{0.867}${\scriptsize$\pm.006$} \\
\bottomrule
\end{tabular}
\end{table}

\textbf{Per-modality behaviour and order robustness.} CADRE collapses neither an early
modality nor a class: it achieves the best abnormal-class recall on the hardest
modality, chest radiography ($0.607$ against $0.389$ to $0.511$ for the LoRA baselines),
while remaining strong on histopathology ($0.856$ / $0.776$ by class) and ultrasound
($0.911$ / $0.930$). Order sensitivity surfaces as variance across (seed, order) pairs:
CADRE's forgetting SEM is $\pm0.006$ on a mean of $0.011$, four times tighter than
LoRA$+$EWC's $\pm0.026$ on $0.081$, and its BWT SEM is $\pm0.008$ against $\pm0.028$,
consistent with the scale guarantees of Sec.~\ref{sec:method}, though two of six orders
is evidence for, not a demonstration of, order-invariance. The $A_{t,j}$ matrix on the
first order shows histopathology near-flat ($0.857\!\to\!0.870\!\to\!0.844$) and
chest-radiograph accuracy \emph{improving} after a later modality
($0.581\!\to\!0.589$).

\textbf{Component attribution.} Table~\ref{tab:ablation} toggles each component off with
the rest held fixed; CADRE-full's forgetting in this arm ($0.009\pm0.004$) matches the
main comparison ($0.011\pm0.006$), so attribution and headline share one protocol. Three
removals significantly raise forgetting: the consolidation redesign, the anchor and EMA.
Reverting the redesign costs the most accuracy ($0.777\!\rightarrow\!0.749$) and is the
primary stability driver; the anchor's clearest effect is on accuracy ($p{=}0.005$). EMA
slightly \emph{lowers} accuracy when present ($0.784\!\rightarrow\!0.777$) while
suppressing forgetting, as expected, since trajectory averaging damps late-stream
movement mechanically. Label smoothing and M3 show no significant effect at $n{=}6$; for
M3 mean forgetting is the wrong endpoint, since it targets \emph{variance across
orders}, which two orders cannot test.

\begin{table}[t]
\centering
\caption{Component attribution: each row removes one component, the rest held fixed (3
seeds $\times$ 2 orders; $p$ on forgetting against CADRE-full, paired, $5$ degrees of
freedom). Rows are ablations of CADRE, not competing methods; removing the anchor also
lowers accuracy significantly ($p{=}0.005$).}
\label{tab:ablation}
\small
\setlength{\tabcolsep}{6pt}
\begin{tabular}{lcccc}
\toprule
Configuration & Forgetting & $p$ vs.\ full & Acc & ECE \\
\midrule
CADRE (full) & $0.009${\scriptsize$\pm.004$} & ---& $0.777$ & $0.140$ \\
\midrule
$-$ self-scaling (revert to vanilla EWC) & $0.038$ & $0.035$ & $0.749$ & $0.129$ \\
$-$ EMA & $0.025$ & $0.042$ & $0.784$ & $0.155$ \\
$-$ anchor & $0.019$ & $0.019$ & $0.744$ & $0.134$ \\
$-$ label smoothing & $0.018$ & $0.36$ & $0.778$ & $0.147$ \\
$-$ similarity-aware retention (M3) & $0.016$ & $0.49$ & $0.770$ & $0.139$ \\
\bottomrule
\end{tabular}
\end{table}

\section{Discussion and Conclusion}
\label{sec:discussion}
Treating forgetting as a quantity to bound turns a continual-learning metric into a
stability property a clinical team can monitor: a precondition for safe updating, not a
proof of it. The substantive finding is narrow. Online EWC has a diagnosable,
scale-induced sensitivity to arrival order, and normalising the Fisher while tying the
multiplier to the task loss removes it without a replacement hyper-parameter;
empirically the redesign is the largest contributor to both retention and accuracy.

\textbf{Limitations.} All comparison methods share CADRE's single-shared-adapter
setting; we do not compare empirically against capacity-expanding, prompt-based or
distillation-based continual
learners~\cite{wang2023olora,liang2024inflora,wang2022dualprompt,li2017learning,paewc2025},
so our claims concern the regularization family at a fixed parameter budget. At $n{=}6$
over two of six orders, the LoRA$+$EWC forgetting comparison is unresolved rather than
null ($p{=}0.072$); a full $5\times6$ grid with a $\lambda$ sweep
and a plot of consolidation mass against $t$ is the direct empirical test of
Propositions~\ref{prop:mass} and~\ref{prop:scale} and is not carried out here. Objective
(S2) is correspondingly weaker than (S1) and (S3): the anchor imposes a soft bound on
expected drift by construction, but we report no measured drift statistic and no runtime
out-of-distribution guardrail result, so bounded drift is a design property rather than
an empirical finding. Image-level grouping leaves same-case leakage possible, and the
benchmark is balanced and in-distribution, so neither class imbalance nor acquisition
shift is probed.

\textbf{Conclusion.} CADRE recasts parameter-efficient continual adaptation as a
stability problem. A self-scaling EWC redesign with a bounded-mass guarantee and a
scale-invariance property, plus an anchor-to-prior drift penalty, gives roughly
sevenfold lower forgetting than the strongest LoRA-family baseline, positive backward
transfer and lower cross-order variance at $\approx$0.23\% of parameters: monitorable
quantities aligned with clinical-safety desiderata, not a deployment guarantee.
Code: \url{https://github.com/rishabhjha1/CADRE}.

\begin{credits}
\subsubsection{\discintname}
The authors have no competing interests to declare.
\end{credits}


\begin{thebibliography}{99}

\bibitem{zhang2023biomedclip}
Zhang, S., Xu, Y., Usuyama, N., Bagga, J., Tinn, R., Preston, S., Rao, R., Wei, M.,
Valluri, N., Wong, C., Lungren, M.P., Naumann, T., Poon, H.:
BiomedCLIP: a multimodal biomedical foundation model pretrained from fifteen million
image--text pairs.
arXiv:2303.00915 (2023)

\bibitem{mccloskey1989catastrophic}
McCloskey, M., Cohen, N.J.:
Catastrophic interference in connectionist networks: the sequential learning problem.
In: Bower, G.H. (ed.) Psychology of Learning and Motivation, vol.~24, pp.~109--165.
Academic Press (1989)

\bibitem{kirkpatrick2017overcoming}
Kirkpatrick, J., Pascanu, R., Rabinowitz, N., Veness, J., Desjardins, G., Rusu, A.A.,
Milan, K., Quan, J., Ramalho, T., Grabska-Barwinska, A., Hassabis, D., Clopath, C.,
Kumaran, D., Hadsell, R.:
Overcoming catastrophic forgetting in neural networks.
Proceedings of the National Academy of Sciences \textbf{114}(13), 3521--3526 (2017)

\bibitem{schwarz2018progress}
Schwarz, J., Czarnecki, W., Luketina, J., Grabska-Barwinska, A., Teh, Y.W.,
Pascanu, R., Hadsell, R.:
Progress \& Compress: a scalable framework for continual learning.
In: Proceedings of the 35th International Conference on Machine Learning (ICML),
pp.~4528--4537 (2018)

\bibitem{hu2021lora}
Hu, E.J., Shen, Y., Wallis, P., Allen-Zhu, Z., Li, Y., Wang, S., Wang, L., Chen, W.:
LoRA: low-rank adaptation of large language models.
In: International Conference on Learning Representations (ICLR) (2022).
arXiv:2106.09685

\bibitem{li2017learning}
Li, Z., Hoiem, D.:
Learning without forgetting.
IEEE Transactions on Pattern Analysis and Machine Intelligence \textbf{40}(12),
2935--2947 (2017)

\bibitem{lopez2017gradient}
Lopez-Paz, D., Ranzato, M.:
Gradient episodic memory for continual learning.
In: Advances in Neural Information Processing Systems 30 (NeurIPS),
pp.~6467--6476 (2017)

\bibitem{rebuffi2017icarl}
Rebuffi, S.A., Kolesnikov, A., Sperl, G., Lampert, C.H.:
iCaRL: incremental classifier and representation learning.
In: Proceedings of the IEEE Conference on Computer Vision and Pattern Recognition
(CVPR), pp.~2001--2010 (2017)

\bibitem{wang2022medclip}
Wang, Z., Wu, Z., Agarwal, D., Sun, J.:
MedCLIP: contrastive learning from unpaired medical images and text.
In: Proceedings of the 2022 Conference on Empirical Methods in Natural Language
Processing (EMNLP), pp.~3894--3905 (2022). arXiv:2210.10163

\bibitem{parisi2019continual}
Parisi, G.I., Kemker, R., Part, J.L., Kanan, C., Wermter, S.:
Continual lifelong learning with neural networks: a review.
Neural Networks \textbf{113}, 54--71 (2019)

\bibitem{delange2022continual}
De Lange, M., Aljundi, R., Masana, M., Parisot, S., Jia, X., Leonardis, A.,
Slabaugh, G., Tuytelaars, T.:
A continual learning survey: defying forgetting in classification tasks.
IEEE Transactions on Pattern Analysis and Machine Intelligence \textbf{44}(7),
3366--3385 (2022)

\bibitem{zenke2017synaptic}
Zenke, F., Poole, B., Ganguli, S.:
Continual learning through synaptic intelligence.
In: Proceedings of the 34th International Conference on Machine Learning (ICML),
pp.~3987--3995 (2017)

\bibitem{aljundi2018memory}
Aljundi, R., Babiloni, F., Elhoseiny, M., Rohrbach, M., Tuytelaars, T.:
Memory aware synapses: learning what (not) to forget.
In: Proceedings of the European Conference on Computer Vision (ECCV),
pp.~144--160 (2018)

\bibitem{titsias2020functional}
Titsias, M.K., Schwarz, J., Matthews, A.G.G., Pascanu, R., Teh, Y.W.:
Functional regularisation for continual learning with Gaussian processes.
In: International Conference on Learning Representations (ICLR) (2020)

\bibitem{houlsby2019parameter}
Houlsby, N., Giurgiu, A., Jastrzebski, S., Morrone, B., de Laroussilhe, Q.,
Gesmundo, A., Attariyan, M., Gelly, S.:
Parameter-efficient transfer learning for NLP.
In: Proceedings of the 36th International Conference on Machine Learning (ICML),
pp.~2790--2799 (2019)

\bibitem{guo2017calibration}
Guo, C., Pleiss, G., Sun, Y., Weinberger, K.Q.:
On calibration of modern neural networks.
In: Proceedings of the 34th International Conference on Machine Learning (ICML),
pp.~1321--1330 (2017)

\bibitem{szegedy2016rethinking}
Szegedy, C., Vanhoucke, V., Ioffe, S., Shlens, J., Wojna, Z.:
Rethinking the Inception architecture for computer vision.
In: Proceedings of the IEEE Conference on Computer Vision and Pattern Recognition
(CVPR), pp.~2818--2826 (2016)

\bibitem{izmailov2018averaging}
Izmailov, P., Podoprikhin, D., Garipov, T., Vetrov, D., Wilson, A.G.:
Averaging weights leads to wider optima and better generalization.
In: Proceedings of the Thirty-Fourth Conference on Uncertainty in Artificial
Intelligence (UAI), pp.~876--885 (2018)

\bibitem{tarvainen2017mean}
Tarvainen, A., Valpola, H.:
Mean teachers are better role models: weight-averaged consistency targets improve
semi-supervised deep learning results.
In: Advances in Neural Information Processing Systems 30 (NeurIPS),
pp.~1195--1204 (2017)

\bibitem{lenga2020continual}
Lenga, M., Schulz, H., Saalbach, A.:
Continual learning for domain adaptation in chest X-ray classification.
In: Proceedings of the Third Conference on Medical Imaging with Deep Learning
(MIDL), PMLR 121, pp.~413--423 (2020). arXiv:2001.05922

\bibitem{perkonigg2021dynamic}
Perkonigg, M., Hofmanninger, J., Herold, C.J., Brink, J.A., Pianykh, O.S.,
Prosch, H., Langs, G.:
Dynamic memory to alleviate catastrophic forgetting in continual learning with
medical imaging.
Nature Communications \textbf{12}, 5678 (2021)

\bibitem{medclsurvey2024}
Qazi, M.A., Nawaz, M., Naseer, M., Khan, S., Khan, F.S.:
Continual learning in medical imaging: a survey and practical analysis.
arXiv:2405.13482 (2024)

\bibitem{paewc2025}
Gao, Z., Morel, P.:
Prompt-aware adaptive elastic weight consolidation for continual learning in
medical vision-language models.
arXiv:2511.20732 (2025)

\bibitem{spanhol2016breakhis}
Spanhol, F.A., Oliveira, L.S., Petitjean, C., Heutte, L.:
A dataset for breast cancer histopathological image classification.
IEEE Transactions on Biomedical Engineering \textbf{63}(7), 1455--1462 (2016)

\bibitem{jia2022vpt}
Jia, M., Tang, L., Chen, B.C., Cardie, C., Belongie, S., Hariharan, B., Lim, S.N.:
Visual prompt tuning.
In: Proceedings of the European Conference on Computer Vision (ECCV),
Lecture Notes in Computer Science 13686, pp.~709--727 (2022)

\bibitem{wang2022dualprompt}
Wang, Z., Zhang, Z., Ebrahimi, S., Sun, R., Zhang, H., Lee, C.Y., Ren, X., Su, G.,
Perot, V., Dy, J., Pfister, T.:
DualPrompt: complementary prompting for rehearsal-free continual learning.
In: Proceedings of the European Conference on Computer Vision (ECCV),
Lecture Notes in Computer Science 13686, pp.~13746--13756 (2022)

\bibitem{wang2023olora}
Wang, X., Chen, T., Ge, Y., Zhou, J., Zhao, H.:
Orthogonal subspace learning for language model continual learning (O-LoRA).
In: Findings of the Association for Computational Linguistics: EMNLP 2023,
pp.~11123--11137 (2023). arXiv:2310.14152

\bibitem{liang2024inflora}
Liang, Y.S., Li, W.J.:
InfLoRA: interference-free low-rank adaptation for continual learning.
In: Proceedings of the IEEE/CVF Conference on Computer Vision and Pattern
Recognition (CVPR), pp.~28672--28681 (2024)

\end{thebibliography}
\end{document}